\newlist{enuminline}{enumerate*}{1}
\setlist[enuminline,1]{label=(\itshape\alph*\upshape)}
\newtheorem{theorem}{Theorem}
\newtheorem{lemma}{Lemma}
\newcommand{\tr}{T}
\def\eqref#1{equation~\ref{#1}}
\def\1{\bm{1}}
\def\vmu{{\bm{\mu}}}
\def\vnu{{\bm{\nu}}}
\def\vw{{\bm{w}}}
\def\vx{{\bm{x}}}
\def\vz{{\bm{z}}}
\def\mI{{\bm{I}}}
\def\mX{{\bm{X}}}
\def\mSigma{{\bm{\Sigma}}}
\def\mPsi{{\bm{\Psi}}}
\DeclareMathAlphabet{\mathsfit}{\encodingdefault}{\sfdefault}{m}{sl}
\SetMathAlphabet{\mathsfit}{bold}{\encodingdefault}{\sfdefault}{bx}{n}
\newcommand{\E}{\mathbb{E}}
\newcommand{\R}{\mathbb{R}}
\newcommand{\KL}{D_{\mathrm{KL}}}
\DeclareMathOperator*{\argmax}{arg\,max}
\DeclareMathOperator*{\argmin}{arg\,min}
\DeclareMathOperator{\Tr}{Tr}
\begin{document}

\twocolumn[
\icmltitle{Scalable and Efficient Comparison-based Search without Features}

\begin{icmlauthorlist}
% Daniyar Chumbalov \And Lucas Maystre \And Matthias Grossglauser
\icmlauthor{Daniyar Chumbalov}{EPFL}
\icmlauthor{Lucas Maystre}{Spotify}
\icmlauthor{Matthias Grossglauser}{EPFL}
\end{icmlauthorlist}

\icmlaffiliation{EPFL}{School of Computer and Communication Sciences, EPFL,
Lausanne, Switzerland.}
\icmlaffiliation{Spotify}{Spotify}

\icmlcorrespondingauthor{Daniyar Chumbalov}{daniyar.chumbalov@epfl.ch}

% \icmlkeywords{Machine Learning, ICML}

\vskip 0.3in
]

% \maketitle

\printAffiliationsAndNotice{}

\begin{abstract}
We consider the problem of finding a target object $t$ using pairwise comparisons,
by asking an oracle questions of the form \emph{``Which object from the pair $(i,j)$ is more similar to $t$?''}.
Objects live in a space of latent features, from which the oracle generates noisy answers.
First, we consider the {\em non-blind} setting where these features are accessible.
We propose a new Bayesian comparison-based search algorithm with noisy answers; it has low computational complexity yet is efficient in the number of queries.
We provide theoretical guarantees, deriving the form of the optimal query and proving almost sure convergence to the target $t$.
Second, we consider the \emph{blind} setting, where the object features are hidden from the search algorithm.
In this setting, we combine our search method and a new distributional triplet embedding algorithm into one scalable learning framework called \textsc{Learn2Search}.
We show that the query complexity of our approach on two real-world datasets is on par with the non-blind setting, which is not achievable using any of the current state-of-the-art embedding methods. Finally, we demonstrate the efficacy of our framework by conducting an experiment with users searching for movie actors.
\end{abstract}

\section{Introduction}
\label{sec:intro}

% define the significance of the search/IR problem; approaches: keyword or other explicit query
Finding a target object among a large collection of $n$ objects is the central problem addressed in information retrieval (IR).
For example, in web search, one finds relevant web pages through a query expressed as one or several keywords.
Of course, the form of the query depends on the object type; other forms of search queries in the literature include finding images similar to a query image \cite{datta2008image}, and finding subgraphs of a large network isomorphic to a query graph \cite{sun2012efficient}.
%
% limitation: need to be able to formulate explicit query, in terms of features of object; for some types of db, this is not natural; however, humans can compare
A common feature of this classic formulation of the search problem is the need to express a meaningful query.
However, this is often a non-trivial task for a human user.
For example, most people would struggle to draw the face of a friend accurately enough that it could be used as a query image.
But they would be able to confirm with near-total certainty whether a photograph is of their friend---we have no real doppelgängers in the world.
In other words, comparing is cognitively much easier than formulating a query.
This is true for many complex or unstructured types of data, such as music, abstract concepts, images, videos, flavors, etc.

% focus on comparison search: finding target by repeated comparisons to query objects; but in addition, assume we know nothing about feature space; only information on organization of space: past searches
In this work, we develop models and algorithms for searching a large database by comparing a sequence of sample objects relative to the target, thereby obviating the need for any explicit query.
The central problem concerns the choice of the sequence of samples (e.g., pairs of faces that the user compares with a target face she remembers).
These query objects have to be chosen in such a way that we learn as much as possible about the target, i.e., shrink the set of potential targets as quickly as possible.
This is closely related to a classic problem in active learning \cite{settles2012active, mackay1992information}, assuming we have a meaningful set of features available for each object in the database.

% assumption: objects live in lower-dimensional space than raw data; example: pictures of faces: can be usefully described by tends of features, arguably lives in low-dimensional space; goal of our approach: use past searches to embed in this low-dim space, in order to improve future searches
It is natural to assume that the universe of objects lives in some low-dimensional latent feature space, even though the raw objects might be high-dimensional (images, videos, sequences of musical notes, etc.).
For example, a human face, for the purposes of a similarity comparison, could be quite accurately described by a few tens of features, capturing head shape, eye color, fullness of lips, gender, age, and the like \cite{chang2017code}.

%We assume in this paper that this latent space is difficult to extract from the raw data, e.g., through face recognition techniques.
%Even though it is often the case that some noisy features could be extracted from the raw data, we choose to focus on a completely {\em blind} setting.
The key component of our framework is a noisy oracle model that describes how a user responds to comparison queries relative to a target, based on the objects' embedding in the latent feature space.
The noise model implicitly determines which queries are likely to extract useful information from the user;
it is similar to pairwise comparison models such as those of \citet{thurstone1927law} or \citet{bradley1952rank}, but here the comparison is always relative to a target.
The compared samples and the target are all represented as points in the feature space.
Our contributions in this context are threefold.
First, we develop an efficient search algorithm, which
\begin{enuminline}
\item generates query pairs and
\item navigates towards the target.
\end{enuminline}
This algorithm is provably guaranteed to converge to the target, due to the specific form of our comparison model and key properties of the search algorithm.
In contrast with prior adaptive search methods, where generating a query pair is typically expensive, our algorithm is also computationally efficient.

% a form of reinforcement learning: min avg # of queries to find target over time
We also consider the more difficult, but realistic, scenario where the objects still live in a low-dimensional feature space, but the search algorithm does not have access to their embedding.
In this {\em blind} setting, the latent features of an object manifest themselves only through the comparison queries.
Although this assumption clearly makes the problem challenging, it also has the advantage of yielding a completely generic approach.
We develop a method to generate a latent embedding of the objects from past searches, in order to make future searches more efficient.
The goal of the method is to minimize the number of queries to locate a target; this is achieved by gradually improving the latent embedding of all the objects based on past searches.
Unlike with previous triplet embedding approaches that use low-dimensional point estimates to represent objects, we embed each object as a Gaussian distribution, which captures the uncertainty in the exact position of the object given the data.
We show that our approach leads to significant gains in a real-world application, where users search for movie actors by comparing faces:
Our framework learns useful low-dimensional representations based on comparison outcomes that lead to decreasing search cost as more data is collected.

% contributions: scalability: we are focused on large # of objects $n$; two components: model for noisy answers, given by embedding; given embedding, efficient (scalable) algo to generate queries; given past queries&answers, algo to embed objects in low-dim space; put together, first combined comp-based search & embedding, with noise, scalable to large $n$
%We combine the search and the embedding problem into a single probabilistic framework.
%(TODO) commenting this out for now, might incorporate back later
%Second, we develop an embedding algorithm that computes a latent embedding of all the objects based on past searches.
%Third, we show experimentally that blind comparison-based search is tractable:
%starting from a random embedding, we can drastically reduce the number of queries needed to identify a target after a few hundred searches.
%Our approach is, to the best of our knowledge, the first complete and scalable approach to blind comparison-based search.

% forward pointers
The rest of the paper is structured as follows.
In Section~\ref{sec:relwork}, we give an overview of related work.
In Section~\ref{sec:preliminaries}, we define the problem of comparison-based search and define our user (oracle) model.
%Then in Section~\ref{sec:search}, we propose \textsc{GaussSearch}: our new scalable method for finding the target object through pairwise comparisons if the features of the objects are known.
In Section~\ref{sec:search}, we describe our interactive search method in the non-blind setting, establish its favorable scalability, and prove its convergence properties.
In Section~\ref{sec:embedding}, we focus on the blind setting, and introduce our distributional embedding method \textsc{GaussEmbed} and its integration into the combined \textsc{Learn2Search} framework.
We demonstrate the performance of our algorithms on synthetic and real world datasets in Section~\ref{sec:experiments} in both non-blind and blind settings, and also present the results of an experiment invloving human oracles searching for movie.
Finally, we give our concluding remarks in Section~\ref{sec:conclusion}.
For conciseness, we defer the full proofs of our theorems to the supplementary material.

\section{Related Work}
\label{sec:relwork}

%TODO: add something else? other literature? write about weaknesses of the papers, and what we bring new? move the section in the place before experiments?

\paragraph{Comparison-Based Search.}
For the general \emph{active learning} problem, where the goal is to identify a target hypothesis by making queries, the Generalized Binary Search method~\cite{dasgupta2005analysis, nowak2008generalized} is known to be near-optimal in the number of queries if the observed outcomes are noiseless.
In the noisy case, \citet{nowak2009noisy} and \citet{golovin2010near} propose objectives that are greedily minimized over all possible queries and outcomes in a Bayesian framework with a full posterior distribution over the hypothesis. We adapt the method suggested by \citet{golovin2010near} to our setting and investigate its performance in comparison to the one proposed in this paper later in Section~\ref{sec:experiments}.

Adaptive search through relevance feedback by using preprocessed features was studied in the context of image retrieval systems by \citet{cox2000bayesian, fang2005experiments, ferecatu2009statistical, suditu2012iterative} in a Bayesian framework and by using different user answer models.

% TODO: the following para is about blind -> should it be moved to triplet embedding para below?
When the features are hidden, finding a target in a set of objects by making similarity queries is explored by \citet{tschopp2011randomized} and \citet{karbasi2012comparison} in the noiseless case. To deal with erroneous answers, \citet{kazemi2018comparison} consider an augmented user model that allows a third outcome to the query, ``?''; it can be used by a user to indicate that they do not know the answer confidently. With this model, they propose several new search techniques under different assumptions on the knowledge of relative object distances. \citet{karbasi2012comparison} also briefly discussed the case of a noisy oracle with a constant mistake probability and proposed to treat it with repeated queries. \citet{haghiri2017comparison} consider the problem of nearest-neighbor search using comparisons in the noiseless setting.

\citet{brochu2008active} study the comparison-based search problem in continuous space.
Their approach relies on maximizing a surrogate GP model of the valuation function.
In contrast to their work, our approach is restricted to a specific valuation function (based on the distance to the target), but our search algorithm is backed by theoretical guarantees, its running time is independent of the number of queries and scales better with the number of objects.
\citet{chu2005extensions} and \citet{houlsby2011bayesian} study a similar problem where the entire valuation function needs to be estimated.
Our approach is inspired from this line of work and also uses the expected information gain to drive the search algorithm.
However, our exact setup enables finding a near-optimal query pair simply by evaluating a closed-form expression, as opposed to searching exhaustively over all pairs of objects.
Comparison-based search under noise is also explored in \citet{canal2019icml}; the authors independently propose an oracle model similar to ours.

%The main drawbacks of these methods was their poor scalability of the next query finding procedure as $n$ grows large.

\paragraph{Triplet Embedding.}
The problem of learning objects embedding based on triplet comparisons has been studied extensively in recent years.
\citet{jamieson2011low} provides a lower bound on the number of triplets needed to uniquely determine an embedding that satisfies all $O(n^3)$ relational constraints and describe an adaptive scheme that builds an embedding by sequentially selecting triplets of objects to compare. 
% This algorithm cannot be applied in our problem as we collect triplets from the completed searches and cannot choose them adaptively.

More general algorithms for constructing ordinal embeddings from a \emph{given} set of similarity triplets, under different triplet probability models, are proposed by \citet{agarwal2007generalized}, \citet{tamuz2011adaptively} and \citet{van2012stochastic}.
\citet{amid2015multiview} modify the optimization problem of \citet{van2012stochastic} to learn at once multiple low-dimensional embeddings, each corresponding to a single latent feature of an object.
Theoretical properties for a general maximum likelihood estimator with an assumed probabilistic generative data model are studied by \citet{jain2016finite}. 

An alternative approach to learning ordinal data embedding is suggested in \citet{kleindessner2017kernel}, where the authors explicitly construct kernel functions to measure similarities between pairs of objects based on the set of triplet comparisons.
Finally, \citet{heim2015efficient} adapt the kernel version of \citet{van2012stochastic} for an online setting, when the set of observable triplets is expanding over time.
The authors use stochastic gradient descent to learn the kernel matrix and to exploit sparsity properties of the gradient.
Although this work is closely related to our scenario, the kernel decomposition, which is $O(n^3)$ in time, would be too computationally expensive to perform in the regimes of interest to us.
%the underlying technique would be too expensive to use in our scheme, since after each kernel update we would have to perform kernel decomposition, in order to obtain embeddings of the objects for the search part, which would lead to $\mathrm{O}(n^3)$ complexity, comparable to the total number of all possible triplets. 

Finally, \citet{ghosh2019landmark} and \citet{anderton2019scaling} both propose landmark-based triplet embedding methods that scale well for large values of $n$ but assume direct access to the oracle in order to ask to compare adaptively selected triplets.

\section{Preliminaries}
\label{sec:preliminaries}

Let us consider $n$ objects denoted by integers $[n] = \{1, 2, \dots, n \}$.
Our goal is to build an interactive comparison-based search engine that would assist users in finding their desired \emph{target} object $t \in [n]$ by sequentially asking questions of the form ``among the given pair of objects $(i,j)$, which is closer to your target?'' and observing their answers $y \in \{i,j\}$.
Formally, at each step of this search, we collect an answer $y$ to the query $(i,j)$.
We then use this information to decide on the next pair of objects $(i', j')$ to show to the user, until one of the elements in the query is recognized as the desired target.
% We assume that except for the comparison data collected during past searches, we do not have access to any other type of representational information about the objects (such as images, text descriptions, etc.).

We assume that the objects have associated latent feature vectors $\mathcal{X} = \{ \vx_1, \vx_2, \dots, \vx_n \}, \vx_i \in \R^d$, that reflect their individual properties and that are intuitively used by users when answering the queries.
In fact, each user response $y$ can be viewed as an outcome of a perceptual judgement on the similarity between objects that is based on these representations.
A natural choice of quantifying similarities between objects is the Euclidean distance between hidden vectors in $\R^d$:
\begin{align*}
  y(i,j ~|~ t) = i ~~\iff~~ ||\vx_i - \vx_t|| < ||\vx_j - \vx_t||.
\end{align*}
Different users might have slightly different perceptions of similarity. Even for a single user, it might be difficult to answer every comparison confidently, and as such we might observe inconsistencies across their replies. We postulate that this happens when two objects $\vx_i$ and $\vx_j$ are roughly equidistant from the target $\vx_t$, and less likely when the distances are quite different. In other words, answers are most noisy when the target is close to the decision boundary, i.e., the bisecting normal hyperplane to the segment between $\vx_i$ and $\vx_j$, and less noisy when it is far away from it.

We consider the following probabilistic model that captures the possible uncertainty in users' answers (\emph{probit model}):
\begin{align}
  p(y(i,j~|~t) = i) &= p(\vx_t^T \vw_{ij} + b_{ij} + \varepsilon > 0) \nonumber \\
  &= \Phi\left(\frac{\vx_t^T \vw_{ij} + b_{ij}}{\sigma_\varepsilon}\right), \label{eq:probit}
\end{align}
where $\vx_i, \vx_j, \vx_t \in \R^d$ are the coordinates of the query points and the target, respectively,
\begin{align*}
h_{ij} = (\vw_{ij}, b_{ij}) = \left( \frac{\vx_i - \vx_j}{\lVert \vx_i - \vx_j \rVert}, \frac{\Vert \vx_j \rVert^2 - \lVert \vx_i \rVert^2}{2\lVert \vx_i - \vx_j \rVert} \right)
\end{align*}
is the bisecting normal hyperplane to the segment between $\vx_i$ and $\vx_j$, $\varepsilon \sim \mathcal{N}(0, \sigma_\varepsilon^2)$ is additive Gaussian noise, and $\Phi$ is the Standard Normal CDF.
Indeed, if $\vx_t$ is on the hyperplane $h_{ij}$, the answers to queries are pure coin flips, as the target point is equally far from both query points. Everywhere else answers are biased toward the correct answer, and the probability of the correct answer depends only on the distance of $\vx_t$ to the hyperplane $h_{ij}$.

% When $\vx_t$ is close to, but not necessarily on the hyperplane, the effect of random noise still plays a major role in the user answer decisions.
% Finally, when $\vx_t$ starts to get farther from the decision bound, probability of the correct outcome increases.

This model is reminiscent of pairwise comparison models such as those of \citet{thurstone1927law} or \citet{bradley1952rank}, e.g., where $p(y(i,j~|~t) = i) = \Phi[s(\vx_i) - s(\vx_j)]$ and $s(\vx) = -\lVert \vx - \vx_t \rVert_2^2$ \cite{agarwal2007generalized, jain2016finite}.
These models have the undesirable property of favoring distant query points: given any $\vx_i, \vx_j \ne \bm{0}$, it is easy to verify that the pair $(2\vx_i, 2\vx_j)$ is strictly more discriminative for any target that does not lie on the bisecting hyperplane.
Our comparison model is different: in (\ref{eq:probit}), the outcome probability depends on $\vx_i$ and $\vx_j$ \emph{only} through their bisecting hyperplane.

\renewcommand{\algorithmicrequire}{\textbf{Input:}}
\renewcommand{\algorithmicensure}{\textbf{Output:}}

\section{Search with Known Features}
\label{sec:search}

In this section, we describe our algorithm for interactive search in a set of objects $[n]$ when we have access to the features $\mathcal{X}$.  We are interested in methods that are
\begin{enuminline}
\item \emph{efficient} in the average number of queries per search, and
\item \emph{scalable} in the number of objects.
\end{enuminline}
Scalability requires \emph{low computational complexity} of the procedure for choosing the next query. As we expect users to make mistakes in their answers, we also require our algorithms to be \emph{robust} to noise in the human feedback.

\paragraph{Gaussian Model.}
Due to the sequential and probabilistic nature of the problem, we take a Bayesian approach in order to model the uncertainty about the location of the target.
In particular, we maintain a $d$-dimensional Gaussian distribution $\mathcal{N} ( \hat{\vx} ; \vmu , \mSigma)$ that reflects our current belief about the position of the target point $\vx_t$ in $\R^d$.
We model user answers with the probit likelihood (\ref{eq:probit}), and we apply approximate inference techniques for updating $\vmu$ and $\mSigma$ every time we observe a query outcome.
The space requirement of the model is $O(d^2)$.

The motivation behind such a choice of parametrization is that
\begin{enuminline}
\item the size of the model does not depend on $n$, guaranteeing scalability,
\item \label{itm:eig} one can characterize a general pair of points in $\R^d$ that maximizes the expected information gain, and
\item the sampling scheme that chooses the next pair of query points informed by~\ref{itm:eig} is simple and works extremely well in practice.
\end{enuminline}

\subsection{Choosing the Next Query}

To generate the next query, we follow a classical approach from information-theoretic active learning \cite{mackay1992information}: find the query that minimizes the expected posterior uncertainty of our estimate $\hat{\vx}$, as given by its entropy.
More specifically, we find a pair of points $(\vx_i, \vx_j)$ that maximizes the expected \emph{information gain} for our current estimation of $\vx_t$ at step $m$:
\begin{align*}
  I[\hat{\vx}; y \mid (\vx_i, \vx_j)] = H(\hat{\vx}) - \E_{ y \mid \hat{\vx}, \vx_i, \vx_j} [ H(\hat{\vx} \mid y) ],
\end{align*}
where $\hat{\vx} \sim \mathcal{N} (\vmu , \mSigma)$ is the current belief about the target position and  $y \sim p(y \mid \hat{\vx}, \vx_i, \vx_j)$ is the current belief over the answer to the comparison between $\vx_i$ and $\vx_j$.

Exhaustively evaluating the expected information gain over all $O(n^2)$ pairs for each query (e.g., as done in \citealp{chu2005extensions}) is prohibitively expensive.
Instead, we propose a more efficient approach.
 % that runs in time $O(\log n)$ with $O(n \log{n})$ preprocessing using a $kd$-tree, that is done only once for an embedding.
Recall from equation~(\ref{eq:probit}) that the comparison outcome probability $p(y \mid \hat{\vx}, \vx_i, \vx_j)$ depends on $(\vx_i, \vx_j)$ only through the corresponding bisecting normal hyperplane $h_{ij}$.
%, if the position of the target $\vx_t$ and parameters of a hyperplane $(\vw, b)$ are fixed, any pair of points in $\R^d$ (not necessarily represented in $\mX$) that generates this hyperplane would have the same probability distribution on the query outcomes.
Therefore, instead of looking for the optimal pair of points from $\mathcal{X}$, we first find the hyperplane $h$ that maximizes the expected information gain.
Following \citet{houlsby2011bayesian}, it can be rewritten as:
\begin{align}
  I[\hat{\vx}; y \mid h]
    &= H(\hat{\vx}) - \E_{ y \mid \hat{\vx}, h} [ H(\hat{\vx} \mid y) ] \nonumber \\
    &= H(y \mid \hat{\vx}, h) - \E_{\hat{\vx}} [ H(y \mid \hat{\vx}, h) ]. \label{eq:utility}
\end{align}
The following theorem gives us the key insight about the general form of a hyperplane $h = (\vw, b)$ that optimizes this utility function by splitting the Gaussian distribution into two equal ``halves'':

\begin{theorem}
\label{thm:optimal}
Let $\hat{\vx} \sim \mathcal{N}(\vmu, \mSigma)$, and let $\mathcal{H} = \{ (\vw, b) \mid \lVert \vw \rVert = 1, \vw^\tr \vmu + b = 0 \}$ be the set of all hyperplanes passing through $\vmu$.
Then,
\begin{align*}
%(\vw^\star, b^\star) \in \argmax_{(\vw, b) \in \mathcal{H}} U((\vw, b), \vx) \\
%\iff \vw^\star \in \argmax_{\lVert \vw \rVert = 1} \vw^\tr \mSigma \vw.
\argmax_{h \in \mathcal{H}} I[\hat{\vx}; y \mid h]
= \argmax_{(\vw, b) \in \mathcal{H}} \vw^\tr \mSigma \vw.
\end{align*}
\end{theorem}
\begin{proof}[Proof sketch]
Let $h^\star = (\vw^\star, b^\star)$ be an optimal hyperplane.
Since $\vw^\tr \vmu + b = 0$, we have $p(y \mid \hat{\vx}, h) \equiv 1/2$ and $H(y \mid \hat{\vx}, h) \equiv 1$.
Thus, $h^\star$ is such that $\E_{\hat{\vx}} [ H(y \mid \hat{\vx}, h) ]$ is minimal.
We show that
\begin{align*}
\E_{\hat{\vx}} [ H(y \mid \hat{\vx}, h) ] = \E_{z} [ H( \Phi(z) ) ],
\end{align*}
where $z \sim \mathcal{N}(0, \vw^\tr \mSigma \vw)$.
As the binary entropy function has a single maximum in $1/2$, the expectation on the r.h.s. is minimized when the variance of $z$ is maximized.
\end{proof}
In other words, Theorem~\ref{thm:optimal} states that any optimal hyperplane is orthogonal to a direction that maximizes the variance of $\hat{\vx}$.

Consequently, we find a hyperplane $h$ that maximizes the expected information gain via an eigenvalue decomposition of $\mSigma$.
However, we still need to find a pair of objects $(i,j)$ to form the next query.
We propose a sampling strategy based on Theorem~\ref{thm:optimal}, which we describe in Algorithm~\ref{alg:sample}.
%We maintain a set of points $\mathcal{U}$ that were used as a query.
After computing the optimal hyperplane $h^\star$, we sample a point $\vz_1$ from the current Gaussian belief over the location of the target, and reflect it across $h^\star$ to give the second point $\vz_2$.
Then our query is the closest pair of (distinct) unused points $(\vx_i, \vx_j)$ from the dataset to $(\vz_1,\vz_2)$.

Assuming that the feature vectors $\mathcal{X}$ are organized into a $k$-d tree \cite{bentley1975multidimensional} at a one-time computational cost of $O(n \log{n})$ for a given $\mathcal{X}$, Algorithm~\ref{alg:sample} runs in time $O(\log{n}+d^3)$.
This includes $O(d^3)$ for the eigenvalue decomposition, $O(d^2)$ for sampling from a $d$-dimensional Gaussian distribution, $O(d)$ for mirroring the sample, and $O(\log{n})$ on average for finding the closest pair of points from the dataset using the $k$-d tree.

If needed, computing the eigenvector that corresponds to the largest eigenvalue can be well approximated via the power method with $O(d^2)$ complexity (assuming a constant number of iterations). This will bring down the total complexity of Algorithm~\ref{alg:sample} to $O(\log{n}+d^2)$.

\begin{algorithm}[ht]
  % \centering
  \caption{\textsc{SampleMirror}}\label{alg:sample}
  \begin{algorithmic}[1]
    \REQUIRE current belief $\mathcal{N}(\vmu, \mSigma)$, $\mathcal{X}$, $\mathcal{U}$
    \ENSURE query pair $(\vx_i, \vx_j)$
    \STATE Compute optimal hyperplane $h^\star$ for $(\vmu , \mSigma)$.
    \STATE Sample a point $\vz_1$ from $\mathcal{N} ( \vmu , \mSigma)$.
    \STATE Obtain $\vz_2$, as the reflection of $\vz_1$ across $h^\star$.
    \STATE Find objects $i$ and $j$, s.t. $i \neq j$ and $i,j \not\in \mathcal{U}$, with the closest representations $\vx_i$ and $\vx_j$ to $\vz_1$ and $\vz_2$, respectively.
  \end{algorithmic}
\end{algorithm}

Note that the actual hyperplane defined by the obtained pair $(\vx_i, \vx_j)$ in Algorithm~\ref{alg:sample} does not, in general, coincide with the optimal one.
Nevertheless, the hyperplane $h_{ij}$ approximates $h^\star$ increasingly better as $n$ grows.

% for large enough $n$ we could make a reasonable assumption of high density of points $\{\vx_i\}_{i=1}^n$ in $\R^d$, and hance expect $\vh_{i,j}$ to be a good enough approximation for $\hat{\vh}$.

\subsection{Updating the Model}

Finally, after querying the pair $(i, j)$ and observing outcome $\bar{y}$, we update our belief on the location of target $\hat{\vx}$.
Suppose that we are at the $m$-th step of the search, and denote the belief before observing $\bar{y}$ by $p_{m-1}(\hat{\vx}) = \mathcal{N}(\hat{\vx}; \vmu_{m-1}, \mSigma_{m-1})$.
Ideally, we would like to update it using
\begin{align*}
p^\star_m(\hat{\vx}) \propto p(y = \bar{y} \mid \vx_i, \vx_j, \hat{\vx}) \cdot p_{m-1}(\hat{\vx}).
\end{align*}
However, this distribution is no longer Gaussian.
Therefore, we approximate it by the ``closest'' Gaussian distribution $p_m(\hat{\vx}) \doteq \mathcal{N}(\hat{\vx} ; \vmu_m, \mSigma_m)$.
We use a method known as \emph{assumed density filtering} \cite{minka2001family}, which solves the following program:
\begin{align*}
\min_{\vmu_m, \mSigma_m} \KL \left[ p^\star_m(\hat{\vx}) \ \Vert \ p_m(\hat{\vx}) \right],
\end{align*}
where $\KL[p \Vert q]$ is the Kullback-Leibler divergence from $p$ to $q$.
This can be done in closed form by computing the first two moments of the distribution $p^\star_m$, with running time $O(d^2)$.
Formally, at each step of the search, we perform the following computations (referred to further as the \text{\textsc{Update}} routine):
\begin{gather*}
  \begin{aligned}
  \mSigma_m &= \left( \mSigma_{m-1}^{-1} + \tau \vw \vw^T \right)^{-1} , \\
  \vmu_m    &= \mSigma_m [\mSigma_{m-1}^{-1} \vmu_{m-1} + (\nu - b \tau)\vw],
  \end{aligned} \\[3pt]
  \tau     = \frac{-\beta}{1+ \beta \vw^T \mSigma_{m-1} \vw} , \quad
  \nu      = \frac{\alpha - \beta(\vmu_{m-1}^T \vw + b)}{1+ \beta\vw^T \mSigma_{m-1} \vw},
  % 
  % \alpha &= \frac{\partial}{\partial \mu_{\text{proj}}} \log \Phi(\mu, \vw_{i_m, j_m}\mSigma_m\vw_{i_m, j_m}^T ) & \beta= \frac{\partial^2}{\partial \mu^2} \log \Phi(\mu, \vw_{i_m, j_m}\mSigma_m\vw_{i_m, j_m}^T) \\
\end{gather*}
where $\alpha$ and $\beta$ are the first and second derivatives with respect to $\mu = \vmu_{m-1}^T \vw_{ij}$ of the log-marginal likelihood
\begin{align*}
\log \int_{\R^d} p(\bar{y}) p(\hat{\vx}) d\hat{\vx}
    = \log \Phi \left( \frac{\vmu_{m-1}^T \vw + b}{\sqrt{ \vw^T \mSigma_{m-1} \vw + \sigma_\varepsilon^2}} \right).
\end{align*}

We summarize the full active search procedure in Algorithm~\ref{alg:search}.

% \vspace{\baselineskip}
% \vspace{-\abovedisplayskip}
\vspace{-\parskip}
% \vspace{-4ex}
% \noindent{}
% \begin{minipage}[t]{0.40\textwidth}
% \end{minipage}
% \hfill
% \begin{minipage}[t]{0.57\textwidth}
\begin{algorithm}[ht]
  % \centering
  \caption{\textsc{GaussSearch}}\label{alg:search}
  \begin{algorithmic}[1]
    \REQUIRE Objects $[n]$, feature vectors $\mathcal{X}$.
    \ENSURE target object $t$
    \STATE Initialize $\vmu_0$ and $\mSigma_1$
    \STATE $\mathcal{U} \gets \emptyset$
    \STATE $m \gets 1$
    \REPEAT
      \STATE $\vx_i, \vx_j \gets \textsc{SampleMirror}(\vmu_{m-1} , \mSigma_{m-1}, \mathcal{X}, \mathcal{U})$
      \STATE $\mathcal{U} \gets \mathcal{U} \cup \{i, j\}$
      \STATE $\bar{y} \gets$ noisy comparison outcome as per~(\ref{eq:probit})
      \STATE $\vmu_m, \mSigma_m \gets \textsc{Update}(\vx_i, \vx_j, \bar{y}, \vmu_{m-1} , \mSigma_{m-1}$)
      \STATE $m \gets m + 1$
    \UNTIL $t \in \{i,j\}$
  \end{algorithmic}
\end{algorithm}
% \end{minipage}

% \smallskip

\subsection{Convergence of \textsc{GaussSearch}}

For finite $n$, \textsc{GaussSearch} is guaranteed to terminate because samples are used without repetition.
However, in a scenario where $n$ is effectively infinite because the feature space is dense, i.e., $\mathcal{X} = \R^d$, we are able to show a much stronger result.
The following theorem asserts that, for any initial Gaussian prior distribution over the target with full-rank covariance matrix, the \textsc{GaussSearch} posterior asymptotically concentrates at the target.
\begin{theorem}
\label{thm:convergence}
  If the answers follow~(\ref{eq:probit}), then for any initial $\bm{\mu}_0$ and $\bm{\Sigma}_0 \succ 0$, as $m \to \infty$ we have
  \begin{enumerate}[label={(\roman*)}]
    \setlength\itemsep{0em}
    \setlength\parsep{0pt}
    \item $\Tr(\mSigma_m) \to 0$,
    \item $\vmu_m \to \vx_t$
  \end{enumerate}
  almost surely.
\end{theorem}

\begin{proof}[Proof sketch]
We obtain crucial insights on the asymptotic behavior by studying the case $d = 1$, when $\vx_t$, $\vmu_m$ and $\mSigma_m$ become scalars $x_t$, $\mu_m$ and $\sigma^2_m$, respectively.
We first show that $\sigma^2_m = \Theta(1 /m)$.
Next, we are able to recast $\{\mu_m\}_{m = 0}^\infty$ as a random walk biased towards $x_t$, with step size $\sigma^2_m$.
By drawing on results from stochastic approximation theory \cite{robbins1951stochastic, blum1954approximation}, we can show that $\mu_m \to x_t$ almost surely.
The extension to $d > 1$ follows by noticing that, in the dense case $\mathcal{X} = \R^d$, we can assume (without loss of generality) that the search sequentially iterates over each dimension in a round-robin fashion.
\end{proof}

This result is not a priori obvious, especially in light of the model for the oracle: answers become coin flips as the sample points $\vx_i$ and $\vx_j$ get closer.
Nevertheless, Theorem \ref{thm:convergence} guarantees that the algorithm continues making progress towards the target.

\section{Search with Latent Features}
\label{sec:embedding}

%TODO[Matt]: We have a notation issue: we overload mu and Sigma as both the prior/posterior during a search, and the distr parameters of item embeddings -> proposal: use \nu and \Phi (or sth) for the latter; also, slightly less problematic, we use z for both samples from the posterior, and below in the ELBO def

%In our second setup we assume that the objects' features $\mX$ are not accesible by us anymore, and only the oracle sees them.
In this section, we relax the assumption that the feature vectors $\mathcal{X}$ are available; instead, we consider the blind scenario, where $\mathcal{X}$ is only used implicitly by the oracle to generate answers to queries.
As a consequence, we need to generate an estimated feature embedding $\hat{\mathcal{X}} = \{\hat{\vx}_1, \ldots, \hat{\vx}_n\}$ from comparison data collected from past searches.
As we collect more data, the embedding $\hat{\mathcal{X}}$ should gradually approximate the true embedding $\mathcal{X}$ (up to rotation and reflection), improving the performance of future searches.

In a given dataset of triplet comparison outcomes, one item might be included in few triplets while another might be included in many (e.g., due to the latter being a popular search target), thus we could have more information about one item and less information about another. This suggests that when approximating $\mathcal{X}$, an \emph{uncertainty} about the item's position should be taken into account. The latest should also affect the operation of the search algorithm: After observing the outcome of oracle's comparison between two objects $i$ and $j$, whose positions $\hat{\vx}_i$ and $\hat{\vx}_j$ in our estimated embedding space are noisy, the target posterior distribution should have higher entropy than if $\hat{\vx}_i$ and $\hat{\vx}_j$ were exact (as we had assumed so far).

We incorporate this dependence by generating a distributional embedding, where each estimate $\hat{\vx}_i$ is endowed with a distribution, thus capturing the uncertainty about its true position $\vx_i$.
We also slightly modify the posterior update step (line 8 in Algorithm~\ref{alg:search}) in the search algorithm, in order to reflect the embedding uncertainty in the belief about the target.
We refer to this combined learning framework as \text{\textsc{Learn2Search}}.
It combines \text{\textsc{GaussSearch}}, our search method from the previous section, with a new triplet embedding technique called \text{\textsc{GaussEmbed}}.
We now describe our distributional embedding method, then give a brief description of the combined framework.

% \begin{algorithm}
%   \caption{\textsc{learn2search}}
%   \label{alg:engine}
%   \begin{algorithmic}[1]
%     \State Initialize $\hat{\mX} \in \R^{n \times d}$
%     \State Initialize $\mathcal{T} \gets \emptyset$
%     \While {true}
%       \State Run $k$ interactive searches using embedding $\hat{\mX}$
%       \State Collect \emph{triplets} comparisons from these searches $\mathcal{T}_{\hat{\mX}} = \{ (i_m, j_m, t_m) \}$
%       \State Aggregate $\mathcal{T} \gets \mathcal{T} \cup \mathcal{T}_{\hat{\mX}}$
%       \State Update $\hat{\mX}$ on data $\mathcal{T}$
%     \EndWhile
%   \end{algorithmic}
% \end{algorithm}

%In our second setup we assume that the objects' features $\mX$ are not accesible by us anymore, and only the oracle sees them. In this section we introduce our reinforcement learning framework \text{\textsc{Learn2Search}} that combines our search method \text{\textsc{GaussSearch}} from the previous section with a new triplet embedding technique called \text{\textsc{GaussEmb}}. Let us describe our embedding method first.

\subsection{Distributional Embedding}

%We address the problem of object embedding based on their triplet comparisons data.
As before, suppose we are given a set of objects $[n] = \{1,2,\dots,n \}$, known to have representations $\mathcal{X} \subset \R^d$ in a hidden feature space.
Although we do not have access to $\mathcal{X}$, we observe a set of noisy triplet-based relative similarities of these objects:
% \begin{align*}
  $\mathcal{T} = \{ (i,j;k) \mid \text{object $i$ is closer to $k$ than $j$ is} \},$
% \end{align*}
obtained w.r.t. probabilistic model (\ref{eq:probit}). 
% We take a bayesian viewpoint and aim to find an approximate posterior distribution over the items' embeddings, given the triplets $\mathcal{T}$. For computational tractability, we restrict ourselves to an mean-field approximate posterior, where $p(x1, ..., x_n) = \prod_i \mathcal{N}(mu_i, Sigma_i)$
The goal is to learn a $d$-dimensional Gaussian distribution embedding
\begin{align*}
q(\hat{\vx}_1, \hat{\vx}_2, \dots, \hat{\vx}_n) \doteq \prod_i \mathcal{N}(\hat{\vx}_i ; \vnu_i, \mPsi_i)
\end{align*}
where $\vnu_i \in \R^d$, $\mPsi_i \in \R^{d \times d}$ and diagonal, such that objects that are similar w.r.t. their ``true'' representations $\{\vx_i\}_{i=1}^n$ (reflected by $\mathcal{T}$) are also similar in the learned embedding.
Here for each object, $\vnu_i$ represents the mean guess on the location of object $i$ in $\R^d$, and $\mPsi_i$ represents the uncertainty about its position.

We learn this Gaussian embedding via maximizing the Evidence Lower Bound (ELBO),
\begin{align}
    \mathcal{L} = \E_q \left[ \log{p(\mathcal{T}, \hat{\vx}_1, \ldots, \hat{\vx}_n)} - \log{q(\hat{\vx}_1, \ldots, \hat{\vx}_n)} \right],\label{eq:elbo}
\end{align}
where the joint distribution $p(\mathcal{T}, \hat{\vx}_1, \ldots, \hat{\vx}_n)$ is the product of the prior $p(\hat{\vx}_1, \ldots, \hat{\vx}_n) = \prod_i \mathcal{N}(\hat{\vx}_i; \bm{0}, \mI)$ and the likelihood  $p(\mathcal{T} \mid \hat{\vx}_1, \ldots, \hat{\vx}_n)$ given w.r.t. the probabilistic model (\ref{eq:probit}) with $\sigma_\varepsilon = 1$.
We optimize the ELBO by using stochastic backpropagation with the reparametrization trick \citep{kingma2014auto, rezende2014stochastic}.
The total complexity of executing one epoch is $O(|\mathcal{T}| d)$.

% \vspace{-\abovedisplayskip}
% \begin{wrapfigure}{R}{0.5\textwidth}
% \begin{minipage}[t]{0.40\textwidth}
% \begin{algorithm}[H]
%   \caption{\textsc{learn2search}}
%   \label{alg:engine}
%   \begin{algorithmic}[1]
%     \State Initialize $\hat{\mX}$, $\hat{\vx}_i = \mathcal{N}(\vnu_i, \mPsi_i)$
%     \State Initialize $\mathcal{T} \gets \emptyset$
%     \While {true}
%       \State Run $k$ interactive searches with \text{\textsc{GaussSearch}} using embedding $\hat{\mX}$
%       \State Collect \emph{triplets} comparisons from these searches $\mathcal{T}_{\hat{\mX}} = \{ (i_m, j_m, t_m) \}$
%       \State Aggregate $\mathcal{T} \gets \mathcal{T} \cup \mathcal{T}_{\hat{\mX}}$
%       \State Update $\hat{\mX}$ on data $\mathcal{T}$ via minimizing (\ref{eq:elbo}).
%     \EndWhile
%   \end{algorithmic}
% \end{algorithm}
% \end{minipage}
% \end{wrapfigure}

\subsection{Learning to Search}

We now describe our joint embedding and search framework, \textsc{Learn2Search}.
We begin by initializing $\hat{\mathcal{X}}$ to its prior distribution, $\prod_i \mathcal{N}(\hat{\vx}_i; \bm{0}, \mI)$.
% , and is periodically updated after collecting comparison data from users through searches.
Then we alternate between searching and updating the embedding, based on all the comparison triplets collected in previous searches, by
\begin{enuminline}
\item running \text{\textsc{GaussSearch}} on the estimated embedding $\hat{\mathcal{X}}$, and
\item running \text{\textsc{GaussEmbed}} to refine $\hat{\mathcal{X}}$ based on all triplets observed so far.
\end{enuminline}
%, that approximates their actualy hidden features $\mX$ in a way that preserves the relative objects similarities, obtained during previous searches.
%One way to run \text{\textsc{GaussSearch}} "as is" on the learned $\hat{\mX}$ is to use only points' mean vectors, $\hat{\mX_\mu}$, as their feature vectors. However, we would also like to make use of the uncertainty about the position of the points given by \text{\textsc{GaussEmb}}. In order to take the full advantage of the Gaussian embedding, we incorporate the learned points' variances by enhancing our search algorithm with two modifications:
The distributional embedding requires the following modifications in \textsc{GaussSearch}.
\begin{enumerate}
\setlength\itemsep{0em}
\item in \text{\textsc{SampleMirror}}, we use the Mahalanobis distance $(\vnu - \vz)^T\mPsi^{-1}(\vnu - \vz)$ when finding the closest objects $i, j$ to the sampled points $\vz_1, \vz_2$.

\item in the \text{\textsc{Update}} step, we combine the variance of the noisy outcome $\sigma^2_\varepsilon$ with the uncertainty over the location of $\hat{\vx}_i$ and $\hat{\vx}_j$ along the direction perpendicular to the hyperplane, resulting in an effective noise variance
$\sigma_{\text{eff}}^2 = \sigma_\varepsilon^2 + \vw^T \mPsi_i \vw + \vw^T \mPsi_j \vw$. 
\end{enumerate}
Both modifications results in a natural tradeoff between exploration and exploitation: At first, we tend to sample uncertain points more frequently, and the large effective noise variance restricts the speed at which the covariance of the belief decreases.
After multiple searches, as we get more confident about the embedding, we start exploiting the structure of the space more assertively.
% We sum up the general scheme of \text{\textsc{Learn2Search}} in Algorithm~\ref{alg:engine}.
%% \filbreak
%% \section{Experiments}
%% \filbreak
%% \label{sec:experiments}
%% \filbreak
%% \subsection{\text{\textsc{GaussSearch}}: Query And Computational Complexities}
%% \filbreak

% \noindent{}

% \begin{algorithm}
%   \caption{\textsc{learn2search}}
%   \label{alg:engine}
%   \begin{algorithmic}[1]
%     \State Initialize $\hat{\mX} \in \R^{n \times d}$
%     \State Initialize $\mathcal{T} \gets \emptyset$
%     \While {true}
%       \State Run $k$ interactive searches using embedding $\hat{\mX}$
%       \State Collect \emph{triplets} comparisons from these searches $\mathcal{T}_{\hat{\mX}} = \{ (i_m, j_m, t_m) \}$
%       \State Aggregate $\mathcal{T} \gets \mathcal{T} \cup \mathcal{T}_{\hat{\mX}}$
%       \State Update $\hat{\mX}$ on data $\mathcal{T}$
%     \EndWhile
%   \end{algorithmic}
% \end{algorithm}

\section{Experiments}
\label{sec:experiments}

In this section, we evaluate our approach empirically and compare it to competing methods.
In Section~\ref{sec:exp:nonblind} we focus on the query complexity and computational performance of comparison-based search algorithms under the assumption that the feature vectors $\mathcal{X}$ are known.
We then turn to the blind setting. In Section~\ref{sec:exp:blind} we evaluate our joint search \& embedding framework on two real world datasets and compare it against state-of-the-art embedding methods. In Section~\ref{sec:fs-experiment} we describe the results of applying our framework in an experiment on searching for movie actors involving real users.

\subsection{Non-Blind Setting}
\label{sec:exp:nonblind}

\begin{figure*}[ht]
\centering
\vspace{-3mm}
\subfloat[Query complexity]{\label{search_q}\includegraphics[width=8.5cm]{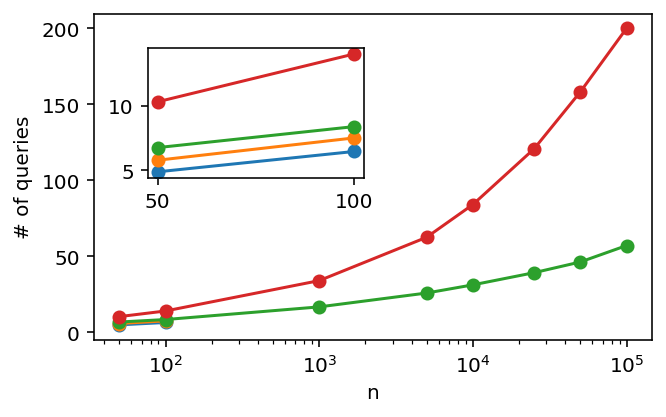}} 
\subfloat[Computational complexity]{\label{search_t}\includegraphics[width=8.5cm]{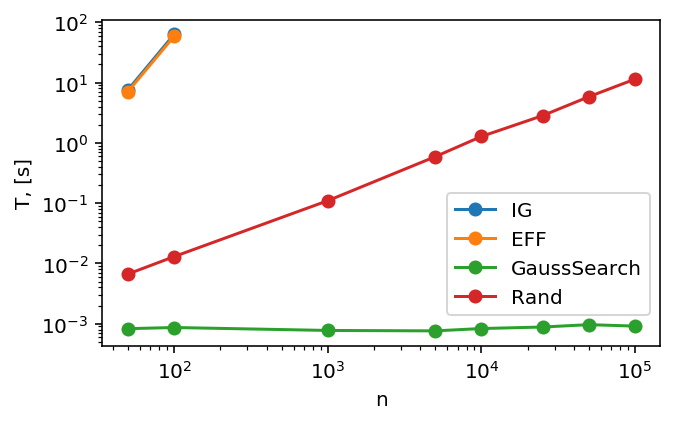}}
\caption{Average (a) query complexity and (b) computational complexity of four search algorithms in the non-blind setting for $d=5$ and increasing $n$.
Search success is declared when $\arg\max_{i \in [n]}P(\vx_i) = t$.}
\label{search_plot}
% \vspace{-4mm}
\end{figure*}

The main difference between \textsc{GaussSearch} and previous methods is the way the uncertainty about the target object is modeled.
As outlined in Section~\ref{sec:relwork}, all previous approaches consider a discrete posterior distribution $P = \{p_1, p_2, \dots, p_n\}$ over all objects from $[n]$, which is updated after each step using Bayes' rule.
We compared \textsc{GaussSearch} to the baselines that operate on the full discrete distribution and that use different rules to choose the next query:
(1) \textsc{IG}: the next query $(i,j)$ is chosen to maximize the expected information gain,
(2) \textsc{Eff}: a fast approximation of the \textsc{EC$^2$} active learning algorithm \cite{golovin2010near} and
(3) \textsc{Rand}: the pair of query points is chosen uniformly at random from $[n]$.

In order to assess how well \textsc{GaussSearch} scales in terms of both query and computational complexities in comparison to the other baselines, we run simulations on synthetically generated data of $n$ points uniformly sampled from a $d=5$ dimensional hypercube.
We vary the number of points $n$ from 50 to 100000.
For each value of $n$, we run 1000 searches, each with a new target sampled independently and uniformly at random among all objects.
During the search, comparison outcomes are sampled from the probit model~(\ref{eq:probit}), using a value of $\sigma_\varepsilon^2$ chosen such that approximately 10\% of the queries' outcomes are flipped.
In order not to give an undue advantage to our algorithm (which never uses an object in a comparison pair more than once), we change the stopping criterion, and declare that a search is successful as soon as the target $t$ becomes the point with the highest probability mass under a given method's target model $P$ (ties are broken at random).
For the \textsc{GaussSearch} procedure, we take $P(\vx_i)$ to be proportional to the density of the Gaussian posterior at $\vx_i$.
% transformed Gaussian posterior on each step into a set of pointwise probabilities by weighing each object by its corresponding density under the current Gaussian distribution and then normalizing the resulting values. 
We use two performance metrics.
\begin{description}
\setlength\parskip{0pt}
\setlength\parsep{0pt}
\item[Query complexity]
The average number of queries until $\arg\max_{i \in [n]}P(\vx_i) = t$, i.e., the true target point has the highest posterior probability.

\item[Computational complexity]
The total time needed for an algorithm to decide which query to make next and to update the posterior upon receiving a comparison outcome.
\end{description}
Fig~\ref{search_q} and Fig~\ref{search_t} show the results averaged over 1000 experiments.
Both \textsc{IG} and \textsc{Eff} require $O(n^3)$ operations to choose the next query, as they perform a greedy search over all possible combinations of query pair and target.
For these two methods, we report results only for $n = 50$ and $n = 100$, as the time required for finding the optimal query and updating the posterior for $n > 100$ takes over a minute per one step.
For $n \in \{50, 100\}$, \textsc{IG} performs best in terms of query complexity, with 4.9 and 6.47 queries per search on average for $n = 50$ and $n=100$, respectively.
However, the running time is prohibitively long already for $n=100$.
In constrast, \textsc{GaussSearch} has a favorable tradeoff between query complexity and computational complexity: in comparison to \textsc{IG}, it requires only 1.8 (for $n=50$) or 2 (for $n=100$) additional queries on average, while the running time is improved by several orders of magnitude.
We observe that, even though in theory the running time of \textsc{GaussSearch} is not independent of $n$, in practice it is almost constant throughout the range of values of $n$ that we investigate.
Finally, we note that \textsc{Rand} performs worst in terms of average number of queries, despite having a higher running time per search step (due to maintaining a discrete posterior).
% 
%Indeed, both \textsc{IG} and \textsc{Eff} require $O(n^3)$ operations to choose the next query, as they perform a greedy search over all possible combinations of query pair and target.
%Furthermore, as all three baselines store full points distribution, they all have $O(n)$ posterior update time complexity and $O(n)$ space complexity.
%This is in contrast to \textsc{GaussSearch} that finds query in $O(\log{n})$ time, updates the posterior in $O(d^2)$ and has $O(d^3)$ space complexity.

% \vspace{-6mm}
\subsection{Blind Setting}
\label{sec:exp:blind}

% \begin{figure}[ht]
% \vspace{-3mm}
% \begin{center}
% \includegraphics[width=8.5cm]{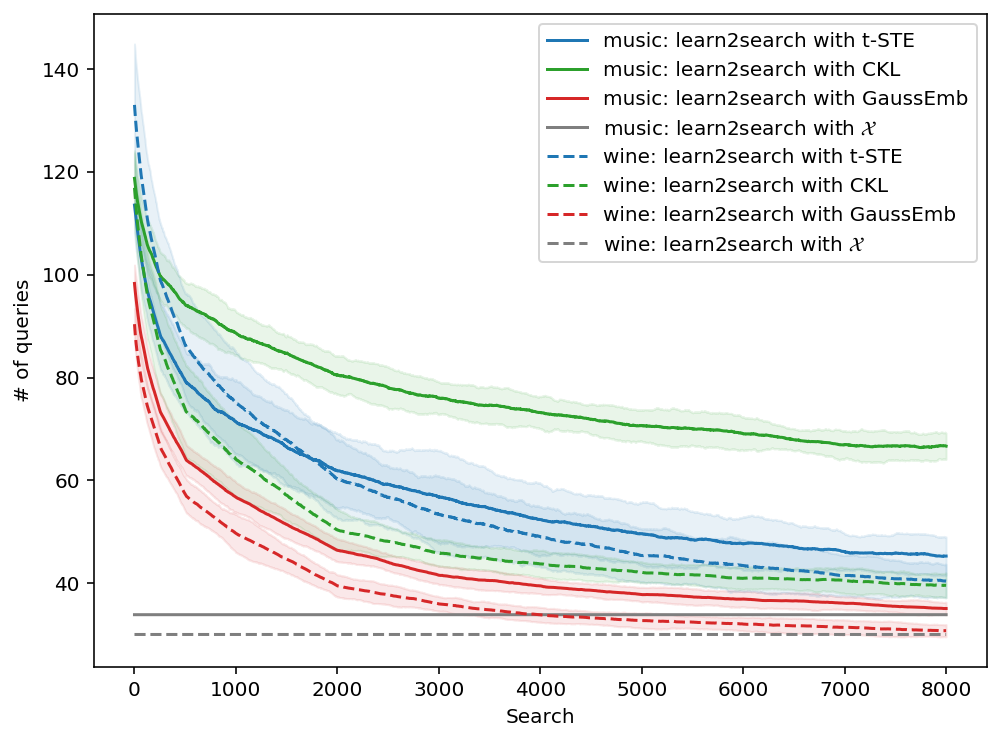}
% \end{center}
% \vspace{-6mm}
% \caption{Combined search and embedding framework \textsc{Learn2Search} on two datasets with different embedding techniques.}
% \label{sande_plot}
% \vspace{-5mm}
% \end{figure}

\begin{figure*}[htp]
\centering
% \subfloat{\label{embedding_music}\includegraphics[width=8.5cm, height=5.5cm]{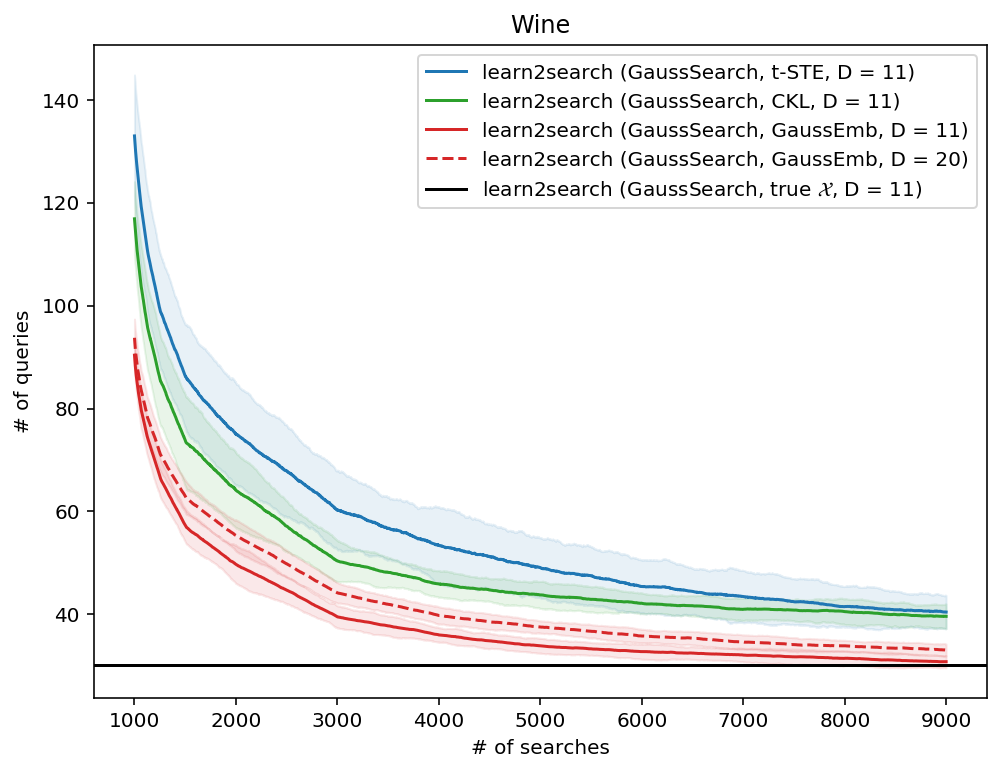}}
% \subfloat{\label{embedding_food}\includegraphics[width=8.5cm, height=5.5cm]{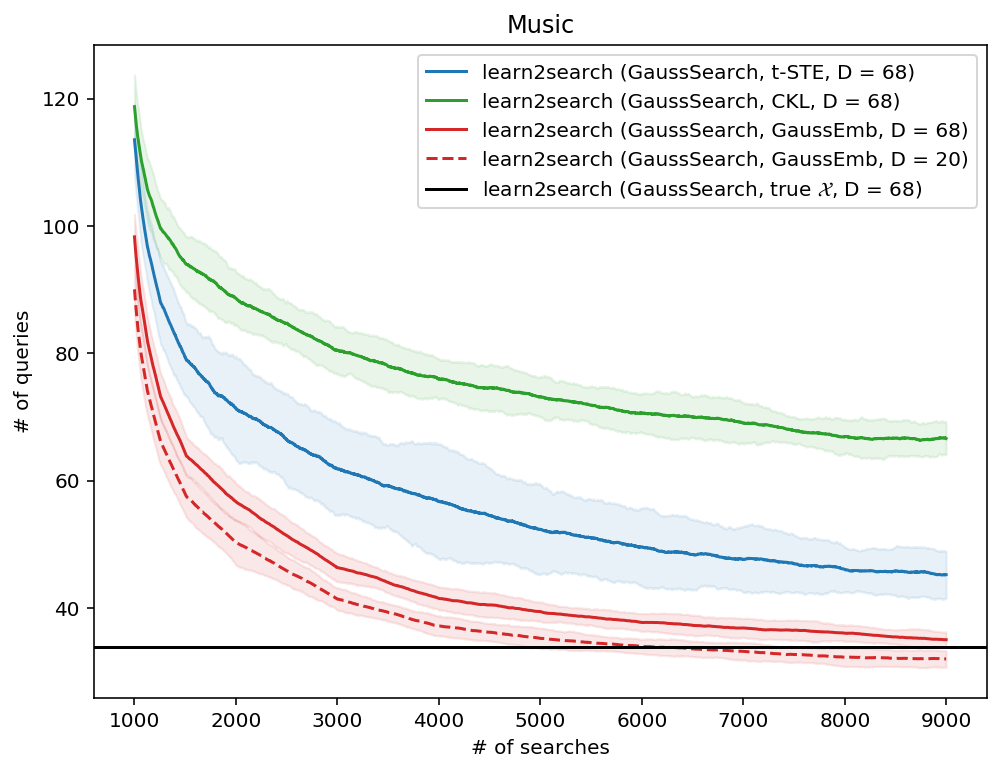}}
\subfloat{\label{embedding_music}\includegraphics[width=8.5cm]{plots/sande_w1.png}}
\subfloat{\label{embedding_food}\includegraphics[width=8.5cm]{plots/sande_m1.png}}
\caption{Combined search and embedding framework \textsc{learn2search} on two datasets with different embedding techniques and choices of $D$. Results are reported over a sliding window of size 1000.}
\label{sande_plot}
\end{figure*}

\begin{figure*}[ht]
\centering
\vspace{-3mm}
% \subfloat[]{\label{fig:fs_searchcost}\includegraphics[width=8.5cm, height=5.5cm]{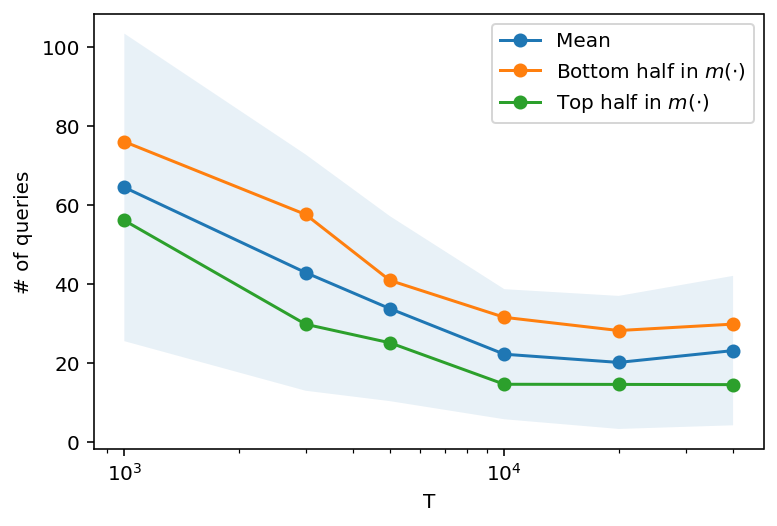}} 
% \subfloat[]{\label{fig:m_dist}\includegraphics[width=8.5cm, height=5.5cm]{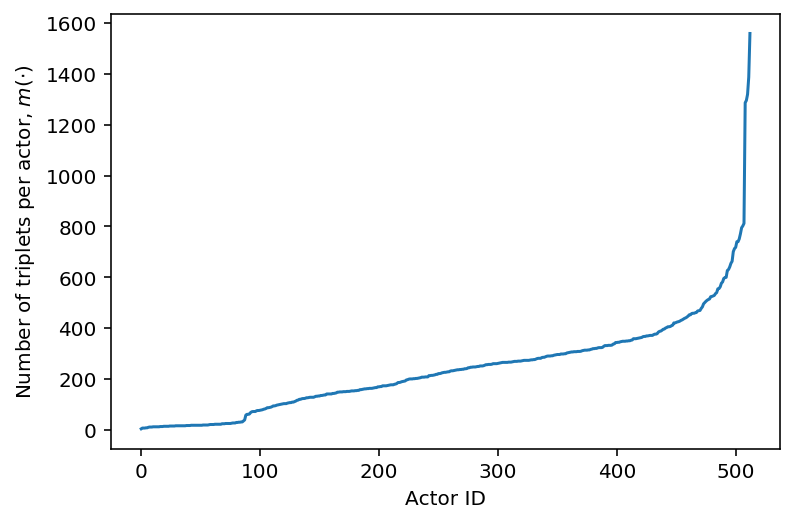}}
\subfloat[]{\label{fig:fs_searchcost}\includegraphics[width=8.5cm]{plots/actors_exp2_q.png}} 
\subfloat[]{\label{fig:m_dist}\includegraphics[width=8.5cm]{plots/actor_triplets.png}}
\caption{(a) Empirical search cost, averaged over $5$ human subjects, for a database with $n=513$ faces of movie actors. The averages in the top and bottom group of targets (in terms of number of triplets) are shown individually as well: clearly, the denser group (green) has lower search cost. The blue shaded area shows one std around the average search cost. Note that the $y$-axis is the number of queries (each query features four pictures). Each choice is broken down into 3 pairwise comparison outcomes. (b) Distribution of the number of collected triplets per actor, $m(\cdot)$. The actor with the highest number of triplets is Halle Berry with $m(\mbox{Halle Berry})=1567$ and the actor with the least number of triplets is Kumail Nanjiani with $m(\mbox{Kumail Nanjiani})=16$. The median number of triplets per actor is 227.}
% \label{search_plot}
% \vspace{-4mm}
\end{figure*}

We now turn our attention to the blind setting and study the performance of \textsc{Learn2Search} on two real world datasets:
the \emph{red wine} dataset~\cite{cortez2009modeling} and the \emph{music} dataset~\cite{zhou2014predicting}.
These datasets were studied in the context of comparison-based search by \citet{kazemi2018comparison}.
We assume that the feature vectors $\{\vx_1, \ldots, \vx_n\}$ are latent: they are used to generate comparison outcomes, but they are not available to the search algorithm.
We compare different approaches to learning an estimate of the feature vectors from comparison outcomes.
In addition to our distributional embedding method \textsc{GaussEmbed}, we consider two state-of-the-art embedding algorithms:
$t$-STE \cite{van2012stochastic} and CKL \cite{tamuz2011adaptively} (both of which produce point estimates of the feature vectors), as well as the ground-truth vectors $\mathcal{X}$.

We run 9000 searches in total, starting from a randomly initialized $\hat{\mathcal{X}}$ and updating the embedding on the $k$-th search for $k \in [2^0, 2^1, \ldots, 2^{13}]$
For each search, the target object is chosen uniformly at random among all objects.
The outcomes of the comparisons queried by \textsc{GaussSearch} are sampled from the probit model~(\ref{eq:probit}) based on the ground-truth feature vectors $\mathcal{X}$.
A search episode ends when the target appears as one of the two objects in the query.
As before, the noise level $\sigma_\varepsilon^2$ is set to corrupt approximately 10\% of the answers on average.
As we jointly perform searches and learn the embedding, we measure the number of queries needed for \textsc{GaussSearch} to find the target using the current version of $\hat{\mathcal{X}}$ for each search, and then we average these numbers over a moving window of size 1000.
% Thus the resulting first value is the average number of queries in the first 1000 searches, the second value is the average number of queries in the searches $2,\dots,1001$ and so on.

We present the results, averaged over 100 experiments, in Fig.~\ref{sande_plot}. The combination of \textsc{GaussSearch} and \textsc{GaussEmbed} manages to learn object representations that give rise to search episodes that are as query-efficient as they would have been using the ground-truth vectors $\mathcal{X}$, and significantly outperforms variants using $t$-STE and CKL for generating embeddings.
It appears that taking into account the uncertainty over the points' locations, as is done by \textsc{GaussEmbed}, helps to make fewer mistakes in early searches and thus leads to a lower query complexity in comparison to using point-estimates of vector embeddings.
As the number of searches grows, \textsc{GaussEmbed} learns better and better representations $\hat{\mathcal{X}}$ and enables \textsc{GaussSearch} to ask fewer and fewer queries in order to find the target.
In the final stages, our blind search method is as efficient as if it had access to the true embedding on both \emph{wine} and \emph{music} datasets. 

One of the challenges that arises when learning an embedding in the blind setting is the choice of $D$. In the case when the true $d$ is not known, we estimated it by first conservatively setting $D=100$, and then, after collecting around 10000 triplets, picking the smallest value of $D$ with 98\% of the energy in the eigenvalue decomposition of the covariance matrix of the mean vectors $\hat{\mX_\mu}$. This number for both datasets was $20$, and $D=20$ was used as the approximation of the actual $d$.
The results of running our \textsc{Learn2Search} with the estimated $d$ is shown in Fig.~\ref{sande_plot} as the dashed lines.
For the wine dataset, we achieved almost the same query complexity as for true $d$.
For the music dataset, after 4000 searches our scheme with estimated $d$ actually outperformed the true features $\mX$: on average, when the search is run on $\hat{\mX}$, \textsc{GaussSearch} needs fewer queries than when it is performed on $\mX$.
This suggests that \textsc{Learn2Search} is not only robust to the choice of $D$, but is also capable of learning useful object representations for the comparison-based search independent of the true features of the objects.
% This suggests that \textsc{Learn2Search} is capable of learning object representations that are useful for the comparison-based search procedure independently of the objects' true feature vectors.

\subsection{Movie Actors Face Search Experiment}
\label{sec:fs-experiment}

We present the results of an experiment involving human oracles, in order to validate the practical relevance of the model and search algorithm developed in this paper.
We collected $n=513$ photographs of the faces of well-known actors and actresses, and we built a web-based search interface implementing the \textsc{Learn2Search} method over this dataset.\footnote{Movie Actors Search: \url{http://who-is-th.at/}. The source code for our methods as well as the data collected in the face search experiment will be available at \url{https://indy.epfl.ch/}.}
In this system, a user can look for an actor by comparing faces: at each search iteration, the user is shown $4$ pictures of faces, and can then click on the face that looks the most like the target; this process repeats until the target is among the $4$ samples.
Once it is, the user clicks on a button to see details about the target, which ends the search.
We have collected slightly more than $\num{40000}$ comparison triplets from users of the site (which include both project participants as well as outside users, who explored the service out of curiosity).
The system is \emph{blind}, in that we do not extract any explicit features of the faces (shape, eye color, etc.)

We try to answer the following two questions:
(i) how efficient are searches? and (ii) how does the search cost depend on the size of the training set?
For this, we recruited $5$ subjects, whose task it was to find the face of an actor, which has been sampled uniformly from the $n$ possible targets.
For each search, we first sampled $T \in \{1000, 3000, 5000, 10k, 20k, 40k\}$ triplets, uniformly from the source $40k+$ triplets, and then generated the embedding as described in Section~\ref{sec:embedding}.
For each value of $T$, each subject performed $10$ searches.
The results are shown in Fig.~\ref{fig:fs_searchcost}.
Observe that for the smallest $T$, the average search cost is essentially equal to the random strategy (which has expected cost $\frac{1}{2} \cdot \frac{n}{4}$).
As $T$ increases, the embedding becomes more meaningful, and the search cost drops significantly. In our experiments, the median time-to-answer is around 5 seconds. Thus, once we have \char`\~10k triplets, the time until the target is found is \char`\~2 minutes. We observe small differences in the time-to-answer as a function of the quality of the embedding: with random embeddings, users tend to answer slightly faster (probably because the images are very different to the target).

An additional comment on the variance in search cost is in order.
Call $m(i)$ the number of triplets in $T$ that object $i$ is part of (in any position).
In our full experimental dataset, $m(i)$ is heavily skewed, see Fig.~\ref{fig:m_dist}, because
\begin{enuminline}
\item objects were added to the system gradually over time, and
\item the distribution over targets follows popularity, which is heavily skewed itself.
\end{enuminline}
Concretely, this ranges from $m(\mbox{Halle Berry})=1567$ to $m(\mbox{Kumail Nanjiani})=16$.
It is natural to suspect that a target $t$ with larger $m(t)$ is easier to find, because its embedding is more precise relative to other objects.
Indeed, our results bear this out: in Fig.~\ref{fig:fs_searchcost}, we break out the search cost for the top and bottom half of targets separately.
This suggests that if we could collect further data, including for the sparse targets (low $m(t)$), the asymptotic search cost would be reduced further.
In summary, our experiment shows that \textsc{Learn2Search} is able to extract an embedding in the blind setting that appears to align with visual features that human oracles rely on to find a face, and is able to navigate through this embedding to locate a target efficiently.

\section{Conclusion}
\label{sec:conclusion}

In this work, we introduce
%\begin{enuminline}
a probablistic model of triplet comparisons,
a search algorithm for finding a target among a set of alternatives, and
a Bayesian embedding algorithm for triplet data.
%\end{enuminline}
Our search algorithm is backed by theoretical guarantees and strikes a favorable computational vs. query complexity trade-off.
By combining the search and embedding algorithms, we design a system that learns to search efficiently without a-priori access to object features.
We observe that the search cost successfully decreases as the number of search episodes increases.
This is achieved by iteratively refining a distributional, latent, and low-dimensional representation of the objects.
Our framework is scalable in the number of objects $n$, tolerates noisy answers, and performs well on synthetic and real-world experiments.

\section*{Acknowledgements}

The authors wish to thank Olivier Cloux for the development of the \url{https://who-is-th.at/} platform, and for his help with running and evaluating the experiments. We are grateful to Holly Cogliati for proof-reading the manuscript.

\balance
\bibliography{learn2search}

\clearpage
% \documentclass{article}
% \input{preamble.tex}
% \input{math_commands.tex}

% % Pick-up labels from main text.
% \usepackage{xr}
% \externaldocument{learn2search}

% %\title{Learning to Search Efficiently Using Comparisons \\
% \title{Scalable and Efficient Comparison-based Search without Features
%  \\
% Supplementary Material}

% \begin{document}

% \date{}

% \maketitle

\appendix

\section{Supplementary Material}

This supplementary material contains:
\begin{enumerate}
	\item Proofs of Theorem~\ref{thm:optimal} and Theorem~\ref{thm:convergence}.
	\item Additional experiments on comparing different embedding techniques.
	\item Additional plot of the 2-PCA of the learned embedding in the movie actors face search experiment.
\end{enumerate}

\section{Proof of Theorem~\ref{thm:optimal}}

\begin{proof}
Without loss of generality, assume that $\sigma_\varepsilon = 1$.
Let $y$ be a binary random variable such that $p(y = 1 | \vw, b, \vx) = \Phi(\vx^\tr \vw + b)$.
Then,
\begin{align}
&\argmax_{(\vw, b) \in \mathcal{H}} I[\vx; y \mid (\vw, b)] \nonumber \\
&\quad = \argmax_{(\vw, b) \in \mathcal{H}} \left\{ 1 - \E_{\hat{\vx}}[H(y \mid \vw, b, \vx)] \right\} \label{eq:step1} \\
&\quad = \argmin_{(\vw, b) \in \mathcal{H}} \int_{\R^d} H\left[ \Phi(\vx^\tr \vw + b) \right] \mathcal{N}(\vx ; \vmu, \mSigma) d\vx \nonumber \\
&\quad = \argmin_{(\vw, b) \in \mathcal{H}} \int_{\R} H\left[ \Phi(t) \right] \mathcal{N}(t ; 0, \vw^\tr \mSigma \vw) dt \label{eq:step2} \\
&\quad = \argmax_{(\vw, b) \in \mathcal{H}} \vw^\tr \mSigma \vw \label{eq:step3}.
\end{align}
In (\ref{eq:step1}), we use (\ref{eq:utility}) and the fact that, as the hyperplane is passing through $\vmu$,
\begin{align*}
H \left[ \int_{\R^d} p(y = 1 | \vw, b, \vx) \mathcal{N}(\vx ; \vmu, \mSigma) d\vx \right] = H(1/2) = 1.
\end{align*}
In (\ref{eq:step2}), we use the fact that $\vx^\tr \vw + b \sim \mathcal{N}(\bm{0}, \vw^\tr \mSigma \vw)$, by properties of the Gaussian distribution.
Finally, in (\ref{eq:step3}), we start by noting that, for all $c_1, c_2$ such that $c_1 / c_2 > 1$, $H\left[ \Phi(c_1 t) \right] \le H\left[ \Phi(c_2 t) \right]$ for all $t$ with equality iff $t = 0$.
Hence, if $\tilde{\sigma}^2 > \sigma^2$, then
\begin{align*}
&\int_{\R} H\left[ \Phi(t) \right] \mathcal{N}(t ; 0, \tilde{\sigma}^2) dt
= \int_{\R} H\left[ \Phi(\tilde{\sigma} t) \right] \mathcal{N}(t ; 0, 1) dt \\
&< \int_{\R} H\left[ \Phi(\sigma t) \right] \mathcal{N}(t ; 0, 1) dt
= \int_{\R} H\left[ \Phi(t) \right] \mathcal{N}(t ; 0, \sigma^2) dt.
\end{align*}
Therefore, maximizing $\vw^\tr \mSigma \vw$ minimizes the expected entropy of $y$.
\end{proof}

\section{Proof of Theorem~\ref{thm:convergence}}

For simplicity, we will assume that $d = 1$; Section~\ref{sec:higherdim} explains how to generalize the result to any $d > 1$.
Denote by $x_t$ the location of the target object, and let $\mathcal{N} ( \hat{x} ; \mu_m , \sigma_m)$ be the belief about the target's location after $m$ observations.
Without loss of generality, let $\sigma_\varepsilon^2 = 1$.
In this case, the updates have the following form.
\begin{align}
\sigma^2_{m+1} &= \sigma^2_m + \beta_m \sigma^4_m, \label{eq:updsigma} \\
\mu_{m+1}      &= \mu_m      + \alpha_m \sigma^2_m \cdot z_m, \nonumber
\end{align}
where $z_m \in \{\pm 1\}$ with $\mathbf{P}(z_m = 1) = \Phi(x_t - \mu_m)$, and
\begin{align*}
\alpha_m &= c / \sqrt{\sigma_m^2 + 1}, \\
\beta_m  &= -c^2 / (\sigma_m^2 + 1), \\
c        &= \sqrt{2/\pi}.
\end{align*}

We start with a lemma that essentially states that $\sigma_m^2$ decreases as $1 / m$.
\begin{lemma}
\label{lem:sigdec}
For any initial $\sigma_0^2 > 0$ and for all $m \ge 0$, the posterior variance $\sigma_m^2$ can be bounded as
\begin{align*}
\frac{ \min\{0.1, \sigma_0^2\} }{m+1} \le \sigma_m^2 \le \frac{ \max\{10, \sigma_0^2\} }{m+1}.
\end{align*}
\end{lemma}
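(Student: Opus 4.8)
The plan is to recast the variance recursion~(\ref{eq:updsigma}) as $\sigma_{m+1}^2 = f(\sigma_m^2)$, where $f(v) := v - \tfrac{c^2 v^2}{v+1}$ with $c^2 = 2/\pi$, and then prove both inequalities by induction on $m$. Two structural facts about $f$ will carry the argument. First, $f$ is strictly increasing on $(0,\infty)$: differentiating gives $f'(v) = (1-c^2) + \tfrac{c^2}{(v+1)^2} > 0$ because $c^2 = 2/\pi < 1$. Second, $0 < f(v) < v$ for every $v > 0$, since $f(v) = v\bigl(1 - \tfrac{c^2 v}{v+1}\bigr)$ and $\tfrac{c^2 v}{v+1} \in (0,1)$. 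The second fact already shows that $\sigma_m^2$ remains positive and is nonincreasing, so every quantity below is well defined.

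For the upper bound, set $C := \max\{10, \sigma_0^2\}$; the base case $\sigma_0^2 \le C$ is immediate. Assuming $\sigma_m^2 \le C/(m+1)$, monotonicity of $f$ gives $\sigma_{m+1}^2 = f(\sigma_m^2) \le f\!\left(\tfrac{C}{m+1}\right)$, so it suffices to verify $f\!\left(\tfrac{C}{m+1}\right) \le \tfrac{C}{m+2}$. Writing $a = C/(m+1)$ and using $\tfrac{a}{a+1} = \tfrac{C}{C+m+1}$, this inequality rearranges (using $c^2(m+2) - 1 \ge 2c^2 - 1 > 0$) to $C \ge \tfrac{m+1}{c^2(m+2)-1}$. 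The right-hand side is decreasing in $m$ — its derivative has the sign of $c^2 - 1 < 0$ — with supremum $\tfrac{1}{2c^2-1} = \tfrac{\pi}{4-\pi} \approx 3.66 < 10 \le C$, so the inductive step holds for every $m$. The lower bound is completely symmetric: with $c' := \min\{0.1, \sigma_0^2\}$ and inductive hypothesis $\sigma_m^2 \ge c'/(m+1)$, monotonicity gives $\sigma_{m+1}^2 \ge f\!\left(\tfrac{c'}{m+1}\right)$, and the requirement $f\!\left(\tfrac{c'}{m+1}\right) \ge \tfrac{c'}{m+2}$ reduces to $c^2 c'(m+2) \le c' + m + 1$. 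Since $c^2 c' \le 0.1 < 1$, the left side is an affine function of $m$ with smaller slope than the right, and at $m=0$ the inequality reads $(2c^2-1)c' \le 1$, which is easily true for $c' \le 0.1$; hence it holds for all $m \ge 0$.

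The main obstacle is purely bookkeeping in the inductive steps: one must pick the exact form of the hypothesis ($\sigma_m^2 \le C/(m+1)$, not $C/m$ or $C/(m+2)$) so that monotonicity of $f$ collapses each step to a single scalar inequality in $m$, and then check that this inequality is dominated by its worst case (either $m = 0$ or $m \to \infty$). Once $f$ is known to be increasing, no delicate estimates are needed — the gap between the true asymptotic constant $1/c^2 = \pi/2$ and the stated constants $0.1$ and $10$ is precisely the slack that absorbs the transient behavior of the recursion for small $m$, including the initial near-geometric decay when $\sigma_0^2$ is large. Section~\ref{sec:higherdim} then lifts the statement to $d > 1$.
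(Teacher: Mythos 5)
Your proof is correct and follows essentially the same route as the paper: establish that $f(v)=\bigl(1-\tfrac{c^2 v}{v+1}\bigr)v$ is increasing, then run an induction with hypothesis $\sigma_m^2 \le \max\{10,\sigma_0^2\}/(m+1)$ (resp. $\ge \min\{0.1,\sigma_0^2\}/(m+1)$), reducing each step to a scalar inequality in $m$ whose worst case is checked explicitly. The only differences are cosmetic: you spell out the lower-bound induction (which the paper leaves as ``similar'') and note positivity of the iterates, and your worst-case constant $\pi/(4-\pi)\approx 3.66 < 10$ matches the paper's slack.
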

\begin{proof}
From (\ref{eq:updsigma}), we know that 
\begin{align*}
\sigma_{m+1}^2 = \left( 1 - c^2 \frac{\sigma_m^2}{\sigma_m^2 + 1} \right) \sigma_m^2
\end{align*}
First, we need to show that
\begin{align*}
f(x) = \left( 1 - c^2 \frac{x}{x + 1} \right) x
\end{align*}
is increasing on $\mathbf{R}_{>0}$.
This is easily verified by checking that
\begin{align*}
f'(x) = \left( 1 - c^2 \frac{x}{x + 1} \right) + x \left(1 - c^2 \frac{1}{(x + 1)^2} \right) \ge 0,
\end{align*}
for all $x \in \mathbf{R}_{>0}$
Next, we consider the upper bound.
Let $b = \max \{10, \sigma_0^2\}$.
We will show that $\sigma_m^2 \le b/ (m+1)$ by induction.
The basis step is immediate: by definition, $\sigma_0^2 \le b$.
The induction step is as follows, for $m \ge 1$.
\begin{align*}
&\sigma_{m}^2
    = \left( 1 - c^2 \frac{\sigma_{m-1}^2}{\sigma_{m-1}^2 + 1} \right) \sigma_{m-1}^2 \\
    &\qquad \le \left( 1 - c^2 \frac{b}{b + m} \right) \frac{b}{m}
    \le \frac{b}{m + 1} \\
&\iff 1 - c^2 \frac{b}{b + m} - \frac{m}{m + 1} \le 0 \\
&\iff b + m - c^2 (bm + b) \le 0 \\
&\iff m(1 - c^2 b) + b(1 - c^2) \le 1 - b(\underbrace{2c^2 - 1}_{\approx 0.27}) \le 0,
\end{align*}
where the first inequality holds because $f(x)$ is increasing.
The lower bound can be proved in a similar way.
\end{proof}

For completeness, we restate Theorem~\ref{thm:convergence} for $d = 1$.

\setcounter{theorem}{1}
\begin{theorem}[Case $d = 1$]
If the answers follow~\eqref{eq:probit}, then for any initial $\mu_0$ and $\sigma_0^2 > 0$ and as $m \to \infty$,
\begin{align*}
\sigma_m^2 &\to 0, \\
\mu_m      &\to x_t
\end{align*}
almost surely.
\end{theorem}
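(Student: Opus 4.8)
The convergence $\sigma_m^2 \to 0$ is immediate from Lemma~\ref{lem:sigdec}, since the upper bound $\max\{10, \sigma_0^2\}/(m+1) \to 0$. So the real work is showing $\mu_m \to x_t$ almost surely. My plan is to analyze the error process $e_m = \mu_m - x_t$ and show it is a supermartingale-like quantity that converges to zero. From the update rule, $e_{m+1} = e_m + \alpha_m \sigma_m^2 z_m$, where $z_m \in \{\pm 1\}$ with $\mathbf{P}(z_m = 1) = \Phi(x_t - \mu_m) = \Phi(-e_m)$. So $\mathbf{E}[z_m \mid \mathcal{F}_m] = 2\Phi(-e_m) - 1$, which has the opposite sign of $e_m$ and is zero iff $e_m = 0$; hence the drift term always points toward $x_t$.

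The natural approach is to study $V_m = e_m^2$ and compute
\begin{align*}
\mathbf{E}[e_{m+1}^2 \mid \mathcal{F}_m] = e_m^2 + 2\alpha_m \sigma_m^2 e_m (2\Phi(-e_m) - 1) + \alpha_m^2 \sigma_m^4.
\end{align*}
The cross term is $\le 0$, and the noise term $\alpha_m^2 \sigma_m^4 = c^2 \sigma_m^6/(\sigma_m^2+1) \le c^2 \sigma_m^6$ is summable because $\sigma_m^2 = O(1/m)$ gives $\sum_m \sigma_m^6 = O(\sum_m m^{-3}) < \infty$. By the Robbins–Siegmund theorem (almost-supermartingale convergence), $e_m^2$ converges almost surely to some finite limit $L \ge 0$, and moreover $\sum_m \alpha_m \sigma_m^2 |e_m| \, |2\Phi(-e_m)-1| < \infty$ almost surely.

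The remaining step is to rule out $L > 0$. This is where I expect the main obstacle. On the event $\{L > 0\}$, eventually $|e_m| \ge \sqrt{L}/2$, so $|2\Phi(-e_m) - 1|$ is bounded below by some constant $\delta > 0$; combined with the lower bound $\sigma_m^2 \ge \min\{0.1, \sigma_0^2\}/(m+1)$ from Lemma~\ref{lem:sigdec} and $\alpha_m = c/\sqrt{\sigma_m^2+1} \ge c/\sqrt{2}$ for large $m$, the sum $\sum_m \alpha_m \sigma_m^2 |e_m| \, |2\Phi(-e_m)-1|$ dominates a constant multiple of $\sum_m \sigma_m^2 \ge \text{const} \cdot \sum_m 1/(m+1) = \infty$, contradicting the summability just established. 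Hence $L = 0$ almost surely, i.e. $\mu_m \to x_t$. The subtlety to handle carefully is that the lower bound on $|2\Phi(-e_m)-1|$ only holds once $e_m^2$ has entered a neighborhood of $L$, so the argument should be run on the almost-sure event where $e_m^2 \to L$ and phrased in terms of tail sums from a (random) time onward; since removing finitely many terms does not affect divergence of $\sum 1/(m+1)$, the contradiction still goes through.
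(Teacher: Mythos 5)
Your proposal is correct, and it differs from the paper's proof only in how much of the stochastic-approximation machinery it makes explicit. The paper handles $\mu_m \to x_t$ by recognizing the update as the Robbins--Monro procedure applied to $g(\mu) = 2\Phi(x_t - \mu) - 1$ with unique root $x_t$, and then simply verifies the step-size conditions $\sum_m \gamma_m = \infty$, $\sum_m \gamma_m^2 < \infty$ (via the same bounds from Lemma~\ref{lem:sigdec} you use) together with $\lvert z_m \rvert \le 1$, citing Robbins--Monro and Blum for the almost-sure convergence. You instead inline that convergence proof: the one-step expansion of $e_{m+1}^2$, the sign of the drift term $e_m\bigl(2\Phi(-e_m)-1\bigr) \le 0$, the summable noise term, Robbins--Siegmund to get $e_m^2 \to L$ together with $\sum_m \alpha_m \sigma_m^2 \lvert e_m\rvert\,\lvert 2\Phi(-e_m)-1\rvert < \infty$ a.s., and the divergence of $\sum_m \gamma_m$ (from the lower bound in Lemma~\ref{lem:sigdec}) to rule out $L > 0$, with the random-time subtlety handled correctly. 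What your version buys is self-containedness --- the reader need not check the exact hypotheses of the cited 1951/1954 results, and your sign for the drift, $\mathbf{E}[z_m \mid \mathcal{F}_m] = 2\Phi(x_t-\mu_m)-1$, is the correct one (the paper's proof has an apparent sign typo there) --- at the cost of a longer argument resting on Robbins--Siegmund instead. One small slip: $\alpha_m^2\sigma_m^4 = c^2\sigma_m^4/(\sigma_m^2+1)$, not $c^2\sigma_m^6/(\sigma_m^2+1)$; this is harmless since $\sigma_m^2 = O(1/m)$ already makes $\sum_m \sigma_m^4 < \infty$.
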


\begin{proof}
The first part of the theorem ($\sigma_m^2 \to 0$) is a trivial consequence of Lemma~\ref{lem:sigdec}.
The second part follows from the fact that our update procedure can be cast as the Robbins-Monro algorithm \cite{robbins1951stochastic} applied to $g(\mu) \doteq 2\Phi(x_t - \mu) - 1$, which has a unique root in $\mu = x_t$.
Indeed, $z_m$ is a stochastic estimate of the function $g$ at $\mu_m$, i.e., $\mathbf{E}(z_m) = 2\Phi(\mu_m - x_t) - 1$.
The remaining conditions to check are as follows.
\begin{itemize}
\item the learning rate $\gamma_m = \alpha_m \sigma_m^2$ satisfies
\begin{align*}
\sum_{m=0}^\infty \gamma_m   &\ge \frac{c \cdot \min\{\sigma_0^2, 0.1\}}{\sqrt{\sigma_0^2 + 1}} \sum_{m}^\infty \frac{1}{m} = \infty,\\
\sum_{m=0}^\infty \gamma_m^2 &\le ( c \cdot \max\{\sigma_0^2, 10.0\} )^2 \sum_{m}^\infty \frac{1}{m^2} < \infty
\end{align*}

\item $\lvert z_m \rvert \le 1$ for all $m$
\end{itemize}

Almost sure convergence then follows directly from the results derived in \cite{robbins1951stochastic} and \cite{blum1954approximation}.
\end{proof}

\subsection{Extending the proof to $d > 1$}
\label{sec:higherdim}

In order to understand how to extend the argument of the proof given above to $d > 1$, the following observation is key: Every query made during a search gives information on the position of the target along a \emph{single} dimension, i.e., the one perpendicular to the bisecting hyperplane.
This can be seen, e.g., from the update rule
\begin{align*}
\mSigma_{m+1} &= \left( \mSigma_m + \tau \vw_{i_m, j_m} \otimes \vw_{i_m, j_m}^T \right)^{-1},
\end{align*}
which reveals that the precision matrix (i.e., the inverse of the covariance matrix) is affected only in the subspace spanned by $\vw_{i_m, j_m}$.

Therefore, if we start with $\bm{\Sigma}_0 = \sigma_0^2 \bm{I}$, we can (without loss of generality) assume that the search procedure sequentially iterates over the dimensions.
At each iteration, the variance shrinks along that dimension only, leaving the other dimensions untouched.
Conceptually, we can think of the case $d > 1$ as interleaving $d$ independent one-dimensional search procedures.
Each of these one-dimensional searches converges to the corresponding coordinate of the target vector $\bm{x}_t$, and the variance along the corresponding dimension shrinks to $0$.

In general, one should consider the fact that the optimal hyperplane is not always unique, and the chosen hyperplane might not align with the current basis of the space.
This case can be taken care of by re-parametrizing the space by using a rotation matrix.
However, these technical details do not bring any new insight as to \emph{why} the result holds.

\section{Performance of Embedding Methods}

\begin{figure*}[htp]
\centering
\subfloat{\label{embedding_music}\includegraphics[width=8.5cm]{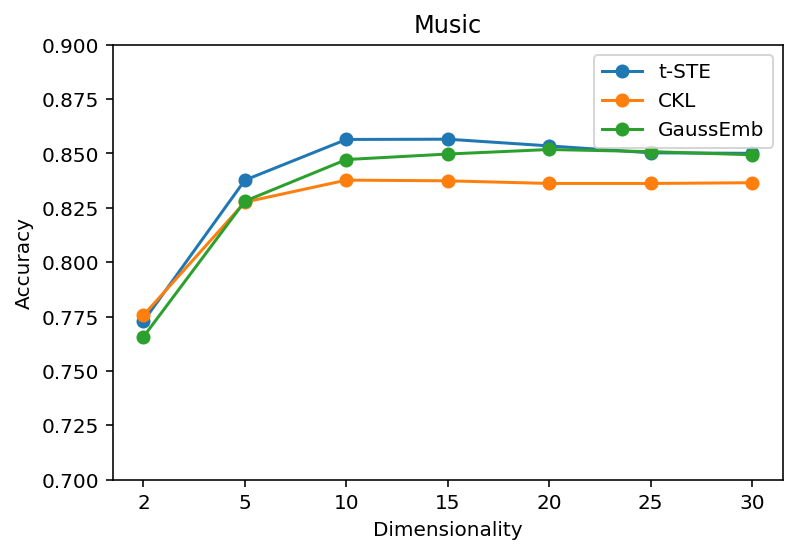}}
\subfloat{\label{embedding_food}\includegraphics[width=8.5cm]{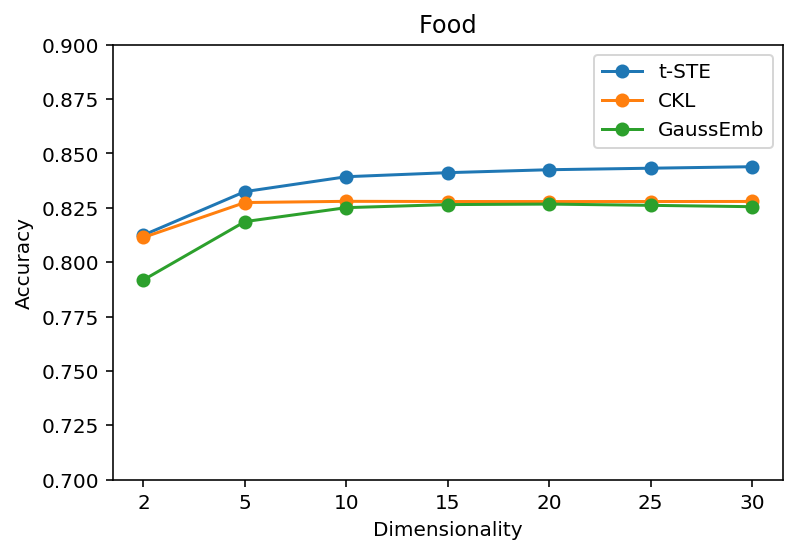}}
\caption{Evaluating embedding methods on two real world datasets: \emph{Music artists}, $n = 400$ music artists with $9090$ triplet comparisons after removing repeating and inconsistent triplets, and \emph{Food}, $n = 100$ images of food with $190376$ collected unique triplets.}
\label{embedding_plot}
\end{figure*}

\begin{figure*}[htp]
  \includegraphics[width=\textwidth]{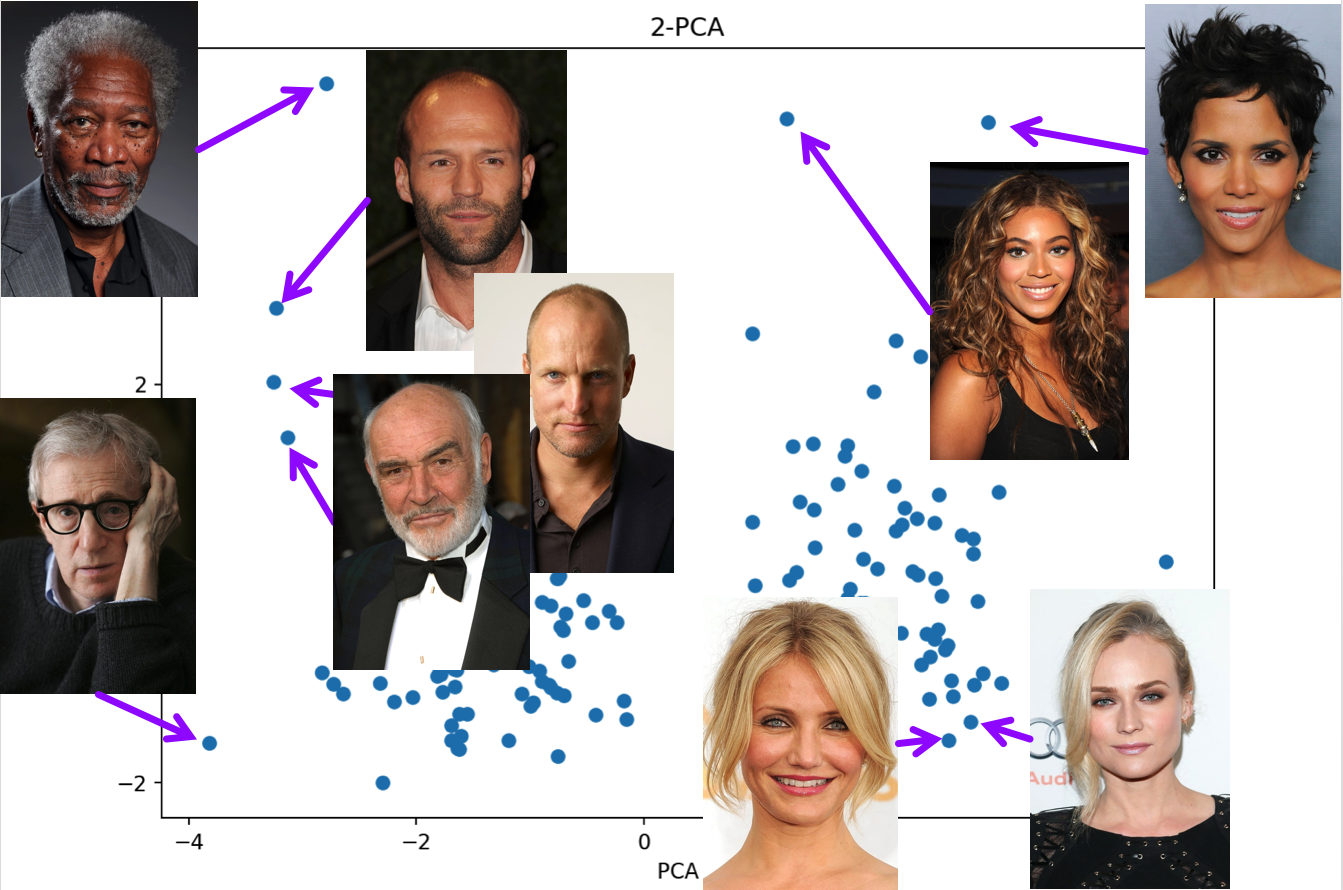}
  \caption{2-PCA on learned embedding.}
  \label{fig:pca_plot}
\end{figure*}

We evaluated the quality of the object embedding learned by our embedding technique \textsc{GaussEmb} on two real world datasets with crowdsourced triplet comparisons: \emph{Music artists} \cite{ellis2002quest} and \emph{Food} \cite{wilber2014cost}.

We compared our model to the state-of-the-art baselines, \textsc{CKL} and t-\textsc{STE}.
We measured accuracy---the percentage of satisfied triplets in the learned embeddings on a holdout set using 10-fold cross validation.
Since the ``true'' dimensionality of the feature space is not known a priori, we also vary the dimensionality $D$ of the estimated embedding between 2 and 30.
The results are presented in Fig.~\ref{embedding_plot}.

Overall, on both datasets, \textsc{GaussEmb} showed similar performance to t-\textsc{STE}, correctly modeling between 83\% and 85\% of triplets, and outperformed \textsc{CKL}.
We can conclude that the noise model considered in this paper reflects the real user behaviour on the comparison-like tasks well.

\section{Movie Actors Face Search Experiment}

We illustrate the results of performing 2-PCA on the learned embedding from the movie actors face search experiment in Fig.~\ref{fig:pca_plot}. 
It appears that the two principal components align well with gender and race. Note that this embedding was obtained solely from the triplet comparisons collected prior to the experiment (slightly more than 40000 triplets in total).

% \bibliographystyle{plain}
% \bibliography{learn2search}
% \bibliography{appendix}

% \end{document}

\end{document}